
\documentclass{article} 
\usepackage[utf8]{inputenc}

\usepackage{txie}
\usepackage[final]{neurips_wrl2022}
\usepackage{bbm}

\setlist{leftmargin=*,topsep=0pt}


\def\shownotes{1}  
\ifnum\shownotes=1
\newcommand{\authnote}[2]{\noindent$^{\text{\fontfamily{cmtt}\em #1:}}\langle${\sf\small #2}$\rangle$}
\else
\newcommand{\authnote}[2]{}
\fi
\def\algo{ARMOR\xspace}
\def\algofull{Adversarial Models for Offline Reinforcement Learning\xspace}
\def\algohighlighted{\textbf{A}dve\textbf{r}sarial \textbf{M}odels for \textbf{O}ffline \textbf{R}inforcement Learning\xspace}




\usepackage[T1]{fontenc}
\usepackage[scaled=.91]{helvet}

\usepackage{inconsolata}






\allowdisplaybreaks



\title{ARMOR: A Model-based Framework for Improving Arbitrary Baseline Policies with Offline Data}

\usepackage{authblk}
\author[*]{\bf Tengyang Xie}
\author[$\dagger$]{\bf Mohak Bhardwaj}
\author[*]{\bf Nan Jiang}
\author[$\ddagger$]{\bf Ching-An Cheng}
\affil[*]{University of Illinois at Urbana-Champaign}
\affil[$\dagger$]{University of Washington}
\affil[$\ddagger$]{Microsoft Research}
\affil[ ]{\normalsize \texttt{\{tx10, nanjiang\}@illinois.edu}\quad \texttt{mohakb@cs.washington.edu}\quad \texttt{chinganc@microsoft.com}}


\date{\today}




\begin{document}

\maketitle





\begin{abstract}

We propose a new model-based offline RL framework, called \algofull (\algo), which can robustly learn policies to improve upon an arbitrary baseline policy regardless of data coverage.
Based on the concept of relative pessimism, \algo is designed to optimize for the worst-case relative performance when facing uncertainty. 
In theory, we prove that the learned policy of \algo never degrades the performance of the baseline policy with \emph{any} admissible hyperparameter, and can learn to compete with the best policy within data coverage when the hyperparameter is well tuned, and the baseline policy is supported by the data.
Such a robust policy improvement property makes \algo especially suitable for building real-world learning systems, because in practice ensuring no performance degradation is imperative before considering any benefit learning can bring.

\end{abstract}
\section{Introduction}

Offline reinforcement learning (RL) is a technique for learning decision-making policies from logged data~\citep{jin2021pessimism,xie2021bellman}. In comparison with alternate learning techniques, such as off-policy RL and imitation learning, offline RL reduces the data assumption needed to learn good policies and does not require collecting new data. Theoretically, offline RL can learn the best policy that the given data can explain: as long as the offline data includes all scenarios that executing a near-optimal policy would encounter, an offline RL algorithm can learn a near-optimal policy, even when the data is collected by highly sub-optimal policies or is not diverse. Such robustness to data coverage quality makes offline RL a promising technique for solving real-world problems, because collecting diverse or expert-quality data in practice is often expensive or simply infeasible.

The fundamental principle behind offline RL is the concept of pessimism in face of uncertainty, which considers worst-case outcomes for scenarios without data. In implementation, this is realized by (explicitly or implicitly) constructing performance lower bounds in policy learning, which penalizes the agent to take uncertain actions. Various designs have been proposed to construct such lower bounds, including behavior regularization~\citep{fujimoto2019off,kumar2019stabilizing,wu2019behavior,laroche2019safe,fujimoto2021minimalist}, point-wise pessimism based on negative bonuses or truncation~\citep{kidambi2020morel,jin2021pessimism}, value penalty~\citep{kumar2020conservative, yu2020mopo}, or two-player games~\citep{cheng2022adversarially,xie2021bellman,uehara2021pessimistic}.
%
Conceptually, the more accurate the lower bound is, the better the learned policy would perform.
%

Despite these advances, offline RL still has not been widely adopted to build learning-based decision systems in practice. 
One reason we posit is that \emph{achieving high performance in the worst case  is not the full picture of designing real-world learning agents.}
%

Usually, we apply machine learning to applications that are not completely unknown, but have some running policies. These policies are the decision rules that are currently used in the system (e.g., an engineered autonomous driving rule, or a heuristic-based system for diagnosis), and the goal of applying a learning algorithm is often to further improve upon these \emph{baseline policies}. 
As a result, it is imperative that the policy learned by the agent does not lead to \emph{performance degradation}. This criterion is especially critical for applications where the poor decision outcomes cannot be tolerated (such as health care, autonomous driving, and commercial resource allocation).

Although optimizing for absolute or relative performance is the same when full information is available, they can lead to different policies when we only have partial data coverage. In this case, the policy that has the best worst-case performance (which most offline RL algorithms aim to recover) would not necessarily perform better than the baseline policies when deployed in the real environment. 
Such performance degradation happens when the data does not cover {all} behaviors of the baseline policies, which can be due to finite samples or a coverage mismatch between the baselines and the data collection policies.
%
%
As a result, running policies learned by existing offline RL algorithms could risk degrading performance.

In this work, we propose a new model-based offline RL framework, called \algohighlighted (\algo), which can robustly learn policies improving upon an arbitrary baseline policy. 
\algo is designed based on the concept of relative pessimism~\citep{cheng2022adversarially}, which aims to optimize for the worst-case relative performance when facing uncertainty. 
In theory, we prove that the learned policy from \algo never degrades the performance of the baseline policy for a range of hyperparameters which is given beforehand, a property known as Robust Policy Improvement (RPI)~\citep{cheng2022adversarially}.
%
%
%
In addition, we prove that, when the right hyperparameter is chosen, and the baseline policy is covered by the data, the learned policy of \algo can also compete with any policy within data coverage in an absolute sense.


%
To our knowledge, RPI property of offline RL has so far been limited to comparing against the data collection policy (i.e. the behavior policy)~\citep{cheng2022adversarially,fujimoto2019off,kumar2019stabilizing,wu2019behavior,laroche2019safe,fujimoto2021minimalist}.
However, it is common that the baseline policy of interest is different from the behavior policy. For example, in robotics manipulation, we often have a dataset of activities different from the target task. 
In this case, comparing against the behavior policy is meaningless, as these policies do not have meaningful performance in the target task.
In \algo, by using models, we extend the technique of relative pessimism to achieve RPI with \emph{arbitrary} baseline policies, regardless of whether they collected the data or not.  

Finally, based on RPI, we discuss and compare different solution concepts for offline RL (such as relative pessimism here as well as other approaches like absolute pessimism and minimax regret).
We show that while these concepts are the same in online RL, in general they lead to different results in offline RL because of the undiminishable uncertainty due to missing data coverage. 
Our discussion reveals some interesting observations and important implications to offline RL algorithm design, which we feel that many in the offline RL community are not actively aware of.

\vspace{-2mm}
\section{Preliminaries}
\vspace{-1mm}

\paragraph{Markov Decision Process}
We consider an agent acting in an infinite-horizon discounted Markov Decision Process (MDP) $M$ defined by the tuple  $\langle \mathcal{S}, \mathcal{A}, \mathcal{P}, R,  \gamma \rangle$ where $\mathcal{S}$ is the state space, $\mathcal{A}$ is the action space, $\mathcal{P} : \mathcal{S} \times \mathcal{A} \rightarrow \Delta\left(\mathcal{S}\right)$ is the transition dynamics, $R: \mathcal{S} \times \mathcal{A} \rightarrow \left[0, 1 \right]$ is a scalar reward function 
and $\gamma \in [0, 1)$ is the discount factor. The learner selects actions using a policy $\pi : \mathcal{S} \rightarrow \Delta\left(\mathcal{A}\right)$. We denote by $\Pi$ the space of all Markovian policies. Let, $d_M^\pi(s,a)$ denote the discounted state-action distribution obtained by running policy $\pi$ on $M$, i.e $d_M^\pi(s,a) = \left(1 - \gamma\right)\mathbb{E}\left[\sum_{t=0}^{\infty} \gamma^t \mathbbm{1} \left(s_t = s, a_t = a | a_t \sim \pi \left(\cdot | s_t\right) \right) \right] $. Let $J_M(\pi)= \mathbb{E}_{\pi, M}\left[ \sum_{t=0}^{\infty} \gamma^t r_t | a_t \sim \pi \right]$ be the expected discounted return of policy $\pi$ on $M$. The goal of reinforcement learning is to find the policy that maximizes $J$. We define the value function as $V^\pi_M(s) = \mathbb{E}_{\pi, M}\left[\sum_{t=0}^{\infty} \gamma^t r_t | s_0 = s \right] $, and the related state-action value function (i.e., Q-function) as $Q^\pi_M(s,a) = \mathbb{E}_{\pi, M}\left[\sum_{t=0}^{\infty} \gamma^t r_t  |s_0 = s, s_0 = a\right]$. We use $[0,\Vmax]$ as the range of value functions.

\vspace{-1mm}
\paragraph{Offline RL}

The aim of offline RL is to output strong policies from a fixed dataset collected using a behavior policy without further environmental interactions. We assume the dataset $\mathcal{D}$ consists of $\lbrace \left(s_i, a_i, r_i, s_{i+1}\right) \rbrace_{i=1}^{N}$, where $(s_i,a_i)$ is sampled i.i.d.~from some distribution $\mu$. 
We also abuse $\mu$ as discounted state-action occupancy of behavior policy, i.e., $\mu \equiv d^\mu_M$, and we use $a \sim \mu(\cdot | s)$ to denote sampling from that behavior policy.
This paper is concerned with the model-based offline RL problem, and we use $\Mcal$ to denote the model class. For each $M \in \Mcal$, we use $P_M :\Scal \times \Acal \to \Delta(\Scal)$ and $R_M:\Scal \times \Acal \to [0,\Rmax]$ to denote the corresponding transition and reward function of $M$.
\begin{assumption}[Realizability]
\label{asm:realizability}
We assume the ground truth model $M^\star$ is in the model class $\Mcal$.
\end{assumption}

\vspace{-2mm}
\section{\algofull (\algo)} \label{sec:algo}
\vspace{-1mm}

In this section, we introduce our proposed approach, \algofull (\algo), in  \cref{alg:atmo}, and present the main theoretical results.
\algo can be viewed as a model-based extension of the ATAC algorithm by~\citet{cheng2022adversarially}. In the next sections, we illustrate that \algo is not only able to compete with the best data-covered policy as prior works~\citep[e.g.,][]{xie2021bellman,uehara2021pessimistic,cheng2022adversarially}, but also enjoys a stronger robust policy improvement guarantee than \citep{cheng2022adversarially}.

\begin{algorithm}[t]
\caption{\algohighlighted (\algo)}
\label{alg:atmo}
{\bfseries Input:} Batch data $\Dcal$. Model class $\Mcal$. Coefficient $\alpha$. Policy class $\Pi$. Reference policy $\piref$.
\begin{algorithmic}[1]
\State Construct version space for the model,
\begin{gather}
\label{eq:v_space}
\Mcal_\alpha = \{M \in \Mcal: \max_{M' \in \Mcal} \Lcal_{\Dcal}(M') - \Lcal_\Dcal(M) \leq \alpha \},
\\
\label{eq:def_loss}
\text{where \quad } \textstyle \Lcal_\Dcal(M) \coloneqq \sum_{(s,a,r,s') \in \Dcal} \left[ \log \PP_M(s'|s,a) - \left(R_M(s,a) - r\right)^2 \right],~\forall M \in \Mcal.
\end{gather}
\State Conduct learning via relative pessimism,
\begin{align}
\label{eq:def_pihat}
\pihat = \argmax_{\pi \in \Pi} \min_{M \in \Mcal_\alpha} J_M(\pi) - J_M(\piref).
\end{align}
%
\end{algorithmic}
\end{algorithm}


Below we analyze \algo theoretically and present guarantees on its absolute performance and the policy improvement over the reference policy $\piref$. Before presenting the detailed guarantees, we introduce generalized single-policy concentrability, which measures the distribution shift over some arbitrary policy $\pi$ and data distribution $\mu$.

\begin{definition}[Generalized Single-policy Concentrability]
We define the generalized single-policy concentrability for policy $\pi$ for model class $\Mcal$ and offline data distribution $\mu$ as
\begin{small}
\begin{align*}
\C_\Mcal(\pi) \coloneqq \sup_{M \in \Mcal} \frac{\E_{d^\pi} \left[ \TV\left(P_M(\cdot|s,a), P^\star(\cdot|s,a)\right)^2 + \left(R_M(s,a) - R^\star(s,a)\right)^2 \right] }{d_{\mu} \left[ \TV\left(P_M(\cdot|s,a), P^\star(\cdot|s,a)\right)^2 + \left(R_M(s,a) - R^\star(s,a)\right)^2 \right] }.
\end{align*}
\end{small}
\end{definition}
Note that $\C_\Mcal(\pi)$ is always upper bounded by the standard single-policy concentrability coefficient $\| d^\pi / \mu \|_\infty$~\citep[e.g.,][]{jin2021pessimism, rashidinejad2021bridging,xie2021policy}, but it can be smaller in general with model class $\Mcal$. It can also be viewed as a model-based analog of the one in~\citet{xie2021bellman}, and the detailed discussion around $\C_\Mcal(\pi)$ refers to~\citet{uehara2021pessimistic}.

We are now ready to present the absolute performance guarantee of \algo.

\begin{theorem}[Absolute performance]
\label{thm:perf}
Under \cref{asm:realizability}, there is an absolute constant $c$ such that for any $\delta \in (0,1]$, if we set $\alpha = c\cdot(\log(\nicefrac{|\Mcal|}{\delta}))$ in \cref{alg:atmo}, then for any reference policy $\piref$ and comparator policy $\picom \in \Pi$, with probability $1 - \delta$, the policy $\pihat$ of \cref{alg:atmo} satisfies
\begin{align*}
J(\picom) - J(\pihat) \leq &~ \Ocal \left( \left[\sqrt{\C_\Mcal(\picom)} + \sqrt{\C_\Mcal(\piref)}\right] \cdot \frac{\Vmax}{1 - \gamma} \sqrt{\frac{\log(\nicefrac{|\Mcal|},{\delta})}{n}} \right).
\end{align*}
\end{theorem}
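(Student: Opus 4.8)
The plan is to bound the true suboptimality $J(\picom)-J(\pihat)$ (all unsubscripted quantities are evaluated in the ground truth $M^\star$) by combining three ingredients: statistical control of the version space $\Mcal_\alpha$, the relative-pessimism optimality of $\pihat$, and a simulation (value-difference) lemma. I would first establish two high-probability facts about $\Mcal_\alpha$ under the choice $\alpha=c\log(|\Mcal|/\delta)$. Because the loss in \eqref{eq:def_loss} decouples into a conditional log-likelihood term for the dynamics and a negative squared-error term for the reward, both facts reduce to standard maximum-likelihood and least-squares concentration together with a union bound over the finite class $\Mcal$. The first fact is that realizability is preserved, $M^\star\in\Mcal_\alpha$: the empirical log-likelihood and reward error of $M^\star$ are within $\Ocal(\log(|\Mcal|/\delta))$ of those of the best model, so $M^\star$ is not excluded. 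The second is that every surviving model is accurate on the data distribution, i.e.\ for all $M\in\Mcal_\alpha$,
\[
\E_\mu\left[\TV(P_M,P^\star)^2+(R_M-R^\star)^2\right]\le \Ocal\!\left(\frac{\log(|\Mcal|/\delta)}{n}\right).
\]
Here I would invoke the standard generalization bound relating an in-sample log-likelihood gap to the out-of-sample squared Hellinger distance (then the elementary conversion $\TV^2\le 2\,\mathrm{H}^2$) for the dynamics, and its least-squares analogue for the reward.

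Next I would exploit the definition of $\pihat$. Since $M^\star\in\Mcal_\alpha$ and $\pihat$ maximizes $\min_{M\in\Mcal_\alpha}[J_M(\pi)-J_M(\piref)]$ over $\pi\in\Pi$,
\[
J_{M^\star}(\pihat)-J_{M^\star}(\piref)\ge \min_{M\in\Mcal_\alpha}\left[J_M(\pihat)-J_M(\piref)\right]\ge \min_{M\in\Mcal_\alpha}\left[J_M(\picom)-J_M(\piref)\right].
\]
Writing $\widetilde M$ for the minimizer on the right and rearranging, the cross terms in $\piref$ cancel and the target telescopes into two model-mismatch terms evaluated at the single model $\widetilde M$:
\[
J(\picom)-J(\pihat)\le \left(J_{M^\star}(\picom)-J_{\widetilde M}(\picom)\right)+\left(J_{\widetilde M}(\piref)-J_{M^\star}(\piref)\right).
\]
This cancellation is exactly the mechanism by which relative pessimism produces a bound depending on $\picom$ and $\piref$ only, rather than on the worst-case model error over all of $\Pi$.

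Then I would bound each term $|J_{\widetilde M}(\pi)-J_{M^\star}(\pi)|$ for $\pi\in\{\picom,\piref\}$ by a simulation lemma, which writes the performance gap as a discounted, occupancy-weighted sum of the per-$(s,a)$ reward discrepancy and the $V^\pi$-weighted transition discrepancy, giving a bound of order $\frac{\Vmax}{1-\gamma}\,\E_{d^\pi}\!\left[\TV(P_{\widetilde M},P^\star)+|R_{\widetilde M}-R^\star|\right]$. A Cauchy--Schwarz step converts this into the $L^2$ form $\frac{\Vmax}{1-\gamma}\sqrt{\E_{d^\pi}[\TV^2+(R_{\widetilde M}-R^\star)^2]}$, which matches the numerator defining $\C_\Mcal(\pi)$. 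Applying the definition of generalized single-policy concentrability replaces $\E_{d^\pi}[\cdot]$ by $\C_\Mcal(\pi)\,\E_\mu[\cdot]$ (valid for the fixed $\widetilde M$), and substituting the version-space bound from the first step gives $|J_{\widetilde M}(\pi)-J_{M^\star}(\pi)|\le \Ocal\!\left(\sqrt{\C_\Mcal(\pi)}\cdot\frac{\Vmax}{1-\gamma}\sqrt{\log(|\Mcal|/\delta)/n}\right)$; summing the instances $\pi=\picom$ and $\pi=\piref$ yields the claimed bound.

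The step I expect to be the main obstacle is the second statistical fact: controlling the out-of-distribution accuracy of \emph{every} model in $\Mcal_\alpha$ uniformly, rather than of a single fitted model. This requires the full MLE concentration machinery together with the Hellinger-to-TV conversion, and some care to fold the reward (squared-error) and transition (likelihood) components into a single per-coordinate quantity so that the resulting bound aligns cleanly with the combined numerator of $\C_\Mcal$. By comparison, the pessimism decomposition, the simulation lemma, and the concentrability substitution are routine once this uniform accuracy bound is in place.
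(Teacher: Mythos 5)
Your proposal is correct and follows essentially the same route as the paper's proof: establish that $M^\star$ survives in $\Mcal_\alpha$ and that every surviving model has $\Ocal(\log(|\Mcal|/\delta)/n)$ error under $\mu$, use the relative-pessimism optimality of $\pihat$ against $\picom$ to telescope the suboptimality into two model-mismatch terms at the minimizing model, and control each via the simulation lemma, Cauchy--Schwarz, and the concentrability definition. No gaps to flag.
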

Roughly speaking, \cref{thm:perf} shows that $\pihat$ learned by \cref{alg:atmo} could compete with any policy $\picom$ with a large enough dataset, as long as the offline data $\mu$ has good coverage on $\picom$ (since the reference policy $\piref$ is the input of \cref{thm:perf}, one can set $\piref = \mu$ (data collection policy) as $\C_\Mcal(\mu) \leq \C_\Mcal(\picom)$).
Compared to the closest model-based offline RL work~\citep{uehara2021pessimistic}, if we set $\piref = \mu$ (data collection policy), \cref{thm:perf} leads to almost the same guarantee as~\citet[Theorem 1]{uehara2021pessimistic} (up to constant factors).

In addition to the guarantee on the absolute performance, below we show that, if \cref{asm:realizability} is satisfied and $\piref \in \Pi$, \algo always improves over $J(\piref)$ for \emph{a wide range choice of pessimistic parameter $\alpha$}. Compared with the model-free ATAC algorithm in \citep[Prop. 6]{cheng2022adversarially}, \cref{thm:RPI} removes the concentration errors of $O(\sqrt{1/N})$ as \algo is model-based.

\begin{theorem}[Robust strong policy improvement]
\label{thm:RPI}
Under \cref{asm:realizability}, there exists an absolute constant $c$ such that for any $\delta \in (0,1]$, if: i) $\alpha \geq c\cdot(\log(\nicefrac{|\Mcal|}{\delta}))$ in \cref{alg:atmo}; ii) $\piref \in \Pi$, then with probability $1 - \delta$, the policy $\pihat$ learned by \cref{alg:atmo} satisfies $J(\piref) \leq J(\pihat)$. 
\end{theorem}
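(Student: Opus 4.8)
The plan is to separate the argument into a deterministic optimization step and a probabilistic concentration step, with randomness (and the absolute constant $c$) entering only through the latter. The deterministic claim is that \emph{whenever} $M^\star \in \Mcal_\alpha$, the relative-pessimism objective in \eqref{eq:def_pihat} forces $J(\pihat) \ge J(\piref)$. The probabilistic claim is that the admissible choice $\alpha \ge c\log(\nicefrac{|\Mcal|}{\delta})$ guarantees $M^\star \in \Mcal_\alpha$ with probability at least $1-\delta$. Composing the two gives the theorem, and it is worth noting that the first step places no requirement on $\alpha$ being well tuned nor on data coverage, which is precisely what yields improvement over the \emph{entire} admissible range of $\alpha$.

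For the deterministic step, I would use that $\piref \in \Pi$ by hypothesis (ii), so $\piref$ is feasible in the outer maximization of \eqref{eq:def_pihat}. Evaluating the inner objective at $\pi = \piref$ gives $\min_{M \in \Mcal_\alpha}[J_M(\piref) - J_M(\piref)] = 0$, whence optimality of $\pihat$ yields $\min_{M \in \Mcal_\alpha}[J_M(\pihat) - J_M(\piref)] \ge 0$. This is a worst-case statement over the version space, so on the event $M^\star \in \Mcal_\alpha$ I can instantiate the inner minimum at $M = M^\star$ to obtain $J(\pihat) - J(\piref) = J_{M^\star}(\pihat) - J_{M^\star}(\piref) \ge \min_{M \in \Mcal_\alpha}[J_M(\pihat) - J_M(\piref)] \ge 0$.

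The concentration step is the technical crux: I must show that $\alpha \ge c\log(\nicefrac{|\Mcal|}{\delta})$ makes $\max_{M' \in \Mcal}\Lcal_\Dcal(M') - \Lcal_\Dcal(M^\star) \le \alpha$ hold with probability $1-\delta$. Since $\Lcal_\Dcal$ decomposes into a transition log-likelihood term and a reward squared-loss term, I would bound each separately. For the log-likelihood part, for a fixed $M'$ I would apply the exponential-moment (Hellinger) trick: with $Z_i^{M'} = \log(\nicefrac{\PP_{M'}(s_i'|s_i,a_i)}{\PP_{M^\star}(s_i'|s_i,a_i)})$, one has $\E[\exp(Z_i^{M'}/2) \mid s_i,a_i] = \sum_{s'}\sqrt{\PP_{M'}(s'|s_i,a_i)\,\PP_{M^\star}(s'|s_i,a_i)} \le 1$, so a Chernoff bound followed by a union bound over the finite class $\Mcal$ controls $\sum_i Z_i^{M'}$ by $O(\log(\nicefrac{|\Mcal|}{\delta}))$. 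For the reward part, boundedness of $R_M$ and $r$ makes the per-sample squared-loss differences sub-Gaussian, so the same Chernoff/union-bound recipe gives a matching $O(\log(\nicefrac{|\Mcal|}{\delta}))$ deviation around their nonnegative population mean. Summing the two bounds and choosing $c$ large enough establishes the claim. This is essentially the finite-class MLE guarantee already used for model-based offline RL~\citep{uehara2021pessimistic}, so I would either invoke it or reproduce the short martingale argument. I expect this to be the main obstacle, the delicate points being keeping the log-loss cross-term nonpositive via the Hellinger bound and combining both terms into a single $\log(\nicefrac{|\Mcal|}{\delta})$ rate with a universal constant $c$ independent of the MDP, $\piref$, and $\delta$; once $M^\star \in \Mcal_\alpha$ is secured, the RPI conclusion is immediate.
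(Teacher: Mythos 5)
Your proposal is correct and takes essentially the same route as the paper: the deterministic step (feasibility of $\piref$ in the outer maximization plus instantiating the inner minimum at $M^\star \in \Mcal_\alpha$) is exactly the paper's proof of the theorem, and the concentration step matches the paper's version-space lemma, which combines a likelihood-ratio Chernoff/union bound for the transition term with a least-squares deviation bound for the reward term. The only minor imprecision is that the reward part needs a Bernstein-type argument exploiting that the variance of the squared-loss differences is controlled by their nonnegative mean (a plain sub-Gaussian bound would give a $\sqrt{N\log(\nicefrac{|\Mcal|}{\delta})}$ deviation rather than $\log(\nicefrac{|\Mcal|}{\delta})$), which is precisely the finite-class guarantee you propose to invoke.
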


\vspace{-2mm}
\section{Robust Policy Improvement (RPI)} \label{sec:RPI}
\vspace{-2mm}

\subsection{How to formally define RPI?}
\vspace{-2mm}

Improving over some reference policy has been long studied in the literature. 
To highlight the advantage of \algo, we formally give the definition of different  policy improvement properties.

\begin{definition}[Robust policy improvement] \label{def:RPI}
Suppose $\pihat$ is the learned policy from an algorithm. We say the algorithm has the policy improvement (PI) guarantee if $J(\piref) - J(\pihat) \leq \nicefrac{o(N)}{N}$ is guaranteed for some reference policy $\piref$ with offline data $\Dcal \sim \mu$, where $N =|\Dcal|$.
We use the following two criteria w.r.t.~$\piref$ and $\mu$ to define different kinds PI:
\begin{enumerate}[(i)]
\item The PI is \txieul{strong} if $\piref$ can be selected arbitrarily from policy class $\Pi$ regardless of the choice data-collection policy $\mu$; otherwise, PI is \txieul{weak} (i.e., $\piref \equiv \mu$ is required).
\item The PI is \txieul{robust} if it can be achieved by a range of hyperparameters with a known subset.
\end{enumerate}
\end{definition}

Weak policy improvement is also known as \emph{safe policy improvement} in the literature~\citep{fujimoto2019off,laroche2019safe}. It requires the reference policy to be also the behavior policy that collects the offline data. 
In comparison, strong policy improvement imposes a stricter requirement, which requires policy improvement \emph{regardless} of how the data were collected. This condition is motivated by the common situation where the reference policy is not the data collection policy. 
%
Finally, since we are learning policies offline, without online interactions, it is not straightforward to tune the hyperparameter directly. 
Therefore, it is desirable that we can design algorithms with these properties in a robust manner in terms of hyperparameter selection. Formally, \cref{def:RPI} requires the policy improvement to be achievable by a set of hyperparameters that is known before learning.

\cref{thm:RPI} indicates the robust strong policy improvement of \algo. On the other hand, algorithms with robust weak policy improvement are available in the literature~\citep{cheng2022adversarially,fujimoto2019off,kumar2019stabilizing,wu2019behavior,laroche2019safe,fujimoto2021minimalist}; this is usually achieved by designing the algorithm to behave like imitation learning (IL) for a known set of hyperparameter (e.g.,  behavior regularization algorithms have a weight that can turn off the RL behavior and regress to IL). 
However, deriving guarantees of achieving the best data-covered policy of the IL-like algorithm is challenging due to its imitating nature. To our best knowledge, ATAC~\citep{cheng2022adversarially} is the only algorithm that achieves both robust (weak) policy improvement as well as guarantees absolute performance.

\vspace{-2mm}
\subsection{When RPI actually improves?}
\vspace{-2mm}

Given \algo's ability to improve over an arbitrary policy, the following questions naturally arise:
\textit{
Can \algo nontrivially improve the output policy of other algorithms (e.g., such as those based on \emph{absolute pessimism} \citep{xie2021bellman}), including itself?}
Note that outputting $\piref$ itself always satisfies RPI, but such result is trivial.  By ``nontrivially'' we mean a non-zero worst-case improvement. 
If the statement were true, we would be able to repeatedly run \algo to improve over itself and then obtain the \emph{best} policy any algorithm can learn offline.

Unfortunately, the answer is negative. Not only \algo cannot improve over itself, but it also cannot improve over a variety of algorithms. In fact, the optimal policy of an \textit{arbitrary} model in the version space is unimprovable (see Corollary~\ref{th:fixed-point examples})! Our discussion reveals some interesting observations (e.g., how equivalent performance metrics for online RL can behave very differently in the offline setting) and their implications (e.g., how we should choose $\piref$ for \algo). Despite their simplicity, we feel that many in the offline RL community are not actively aware of these facts (and the unawareness has led to some confusion), which we hope to clarify below.

\paragraph{Setup} We consider an abstract setup where the learner is given a version space $\Mcal_\alpha$ that contains the true model 
and needs to choose a policy $\pi\in\Pi$ based on $\Mcal_\alpha$. We use the same notation $\Mcal_\alpha$ as before, but emphasize that it does not have to be constructed as in \eqref{eq:v_space} and \eqref{eq:def_loss}. 
In fact, for the purpose of this discussion, the data distribution,  sample size, data randomness, and estimation procedure for constructing $\Mcal_\alpha$ are \textbf{all irrelevant}, as our focus here is how decisions should be made with a given $\Mcal_\alpha$. This makes our setup very generic and the conclusions widely applicable.

To facilitate discussion, we define the \textit{fixed point} of \algo's relative pessimism step:
\begin{definition} \label{def:fixed_point}
Consider \eqref{eq:def_pihat} as an operator that maps an arbitrary policy $\piref$ to $\pihat$. A fixed point of this \emph{relative pessimism} operator is, therefore, any policy $\pi \in \Pi$ such that
$
\pi \in \argmax_{\pi' \in \Pi} \min_{M \in \Mcal_\alpha} J_M(\pi') - J_M(\pi)
$.
\end{definition}
Given the definition, relative pessimism cannot improve over a policy if it is already a fixed point. Below we show a sufficient and necessary condition for being a fixed point, and show a number of concrete examples (some of which may be surprising) that are fixed points and thus unimprovable.

\begin{lemma}[Fixed-point Lemma] \label{lm:fixed-point lm of mb-atac}
For any $\Mcal \subseteq \Mcal_\alpha$ and any $\psi:\Mcal\to\mathbb{R}$, consider the policy
\begin{align} \label{eq:candidate fixed-point policies}
    \pi \in \argmax_{\pi'\in\Pi} \min_{M\in\Mcal} J_M(\pi') + \psi(M)
\end{align}
Then $\pi$ is a fixed point in Definition~\ref{def:fixed_point}.
Conversely, for any fixed point $\pi$ in \cref{def:fixed_point}, there is a $\psi:\Mcal\to\mathbb{R}$ such that $\pi$ is a solution to \eqref{eq:candidate fixed-point policies}.
\end{lemma}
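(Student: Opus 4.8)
The plan is to prove both directions from a single clean reformulation of Definition~\ref{def:fixed_point}. First I would observe that plugging $\pi'=\pi$ into the relative-pessimism objective gives $\min_{M\in\Mcal_\alpha}(J_M(\pi)-J_M(\pi))=0$, so the maximum over $\pi'$ is always at least zero and $\pi$ attains it precisely when that maximum equals zero. Hence $\pi$ is a fixed point if and only if
\[
\min_{M\in\Mcal_\alpha}\big(J_M(\pi')-J_M(\pi)\big)\le 0 \qquad\text{for every }\pi'\in\Pi,
\]
i.e.\ no competing policy can strictly beat $\pi$ simultaneously on all models in $\Mcal_\alpha$. Both directions are then checked against this characterization.

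For the forward direction, suppose $\pi$ solves \eqref{eq:candidate fixed-point policies} for some $\Mcal\subseteq\Mcal_\alpha$ and $\psi$. I would fix an arbitrary $\pi'\in\Pi$ and let $M^\star\in\Mcal$ attain the inner minimum $\min_{M\in\Mcal}(J_M(\pi')+\psi(M))$. Optimality of $\pi$ gives $\min_{M\in\Mcal}(J_M(\pi)+\psi(M))\ge \min_{M\in\Mcal}(J_M(\pi')+\psi(M)) = J_{M^\star}(\pi')+\psi(M^\star)$, while trivially $\min_{M\in\Mcal}(J_M(\pi)+\psi(M))\le J_{M^\star}(\pi)+\psi(M^\star)$. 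Chaining these and cancelling the common term $\psi(M^\star)$ yields $J_{M^\star}(\pi')\le J_{M^\star}(\pi)$. Since $M^\star\in\Mcal\subseteq\Mcal_\alpha$, this witnesses $\min_{M\in\Mcal_\alpha}(J_M(\pi')-J_M(\pi))\le 0$; as $\pi'$ was arbitrary, the characterization shows $\pi$ is a fixed point. The single idea driving this step is that the penalty $\psi(M^\star)$ enters both policies identically at the same model and therefore drops out of the comparison.

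For the converse I would simply exhibit a witness: take $\Mcal=\Mcal_\alpha$ and $\psi(M)=-J_M(\pi)$. Then $\min_{M\in\Mcal}(J_{M}(\pi')+\psi(M)) = \min_{M\in\Mcal_\alpha}(J_M(\pi')-J_M(\pi))$, so \eqref{eq:candidate fixed-point policies} becomes verbatim the relative-pessimism problem of Definition~\ref{def:fixed_point}, which $\pi$ solves by the assumption that it is a fixed point. This makes the converse essentially immediate once the forward direction has fixed the right viewpoint.

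The only point needing care is the existence of the minimizer $M^\star$ in the forward direction. If $\Mcal$ is finite (or compact with $J_\cdot(\pi')+\psi(\cdot)$ lower semicontinuous), the $\min$ is attained and the argument is exactly as above; otherwise I would run the same chain with an $\varepsilon$-approximate minimizer $M_\varepsilon$, obtaining $J_{M_\varepsilon}(\pi')\le J_{M_\varepsilon}(\pi)+\varepsilon$ and hence $\inf_{M\in\Mcal_\alpha}(J_M(\pi')-J_M(\pi))\le\varepsilon$ for all $\varepsilon>0$, then send $\varepsilon\to0$. I expect this attainment issue to be the only technical wrinkle; the cancellation of $\psi$ is the conceptual heart and is otherwise routine.
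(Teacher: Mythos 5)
Your proof is correct and follows essentially the same route as the paper's: both directions hinge on the inclusion $\Mcal\subseteq\Mcal_\alpha$ and on the fact that $\psi$ enters the two policies identically (hence cancels), and the converse uses the same witness $\psi(M)=-J_M(\pi)$. The only difference is presentational --- you argue the forward direction directly by extracting a (near-)minimizer $M^\star$, whereas the paper phrases it as a proof by contradiction assuming a strict improver over all of $\Mcal_\alpha$ --- and your $\varepsilon$-approximate-minimizer remark is a minor refinement of an attainment issue the paper glosses over.
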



\begin{corollary} \label{th:fixed-point examples}
The following are fixed points of relative pessimism (Definition~\ref{def:fixed_point}):
\begin{enumerate}
    \item  Absolute-pessimism policy, i.e., $\psi(M) = 0 $.
    \item Relative-pessimism policy for any reference policy, i.e., $\psi(M) = - J_M(\piref)$. 
    \item Regret-minimization policy, i.e., $\psi(M) = -J_M(\pi_M^*)$, where $\pi_M^* \in \argmax_{\pi\in\Pi} J_M(\pi)$. 
    \item Optimal policy of an \emph{arbitrary} model $M \in \Mcal_\alpha$, $\pi_M^*$, i.e., $\Mcal = \{M\}$. 
    This would include the optimistic policy, that is, $\argmax_{\pi\in\Pi, M\in\Mcal_\alpha} J_M(\pi)$
\end{enumerate}
\end{corollary}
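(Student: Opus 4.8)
The plan is to obtain every item as an immediate specialization of the Fixed-point Lemma (\cref{lm:fixed-point lm of mb-atac}), invoking only its sufficiency direction: that lemma already guarantees any policy of the form \eqref{eq:candidate fixed-point policies} is a fixed point, once we supply a subclass $\Mcal \subseteq \Mcal_\alpha$ and a weighting function $\psi:\Mcal\to\mathbb{R}$. The whole corollary therefore reduces to exhibiting, for each listed policy, the pair $(\Mcal,\psi)$ that casts it into the form of \eqref{eq:candidate fixed-point policies}.

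For items 1--3 I would fix $\Mcal=\Mcal_\alpha$ and only vary $\psi$. Choosing $\psi(M)=0$ makes \eqref{eq:candidate fixed-point policies} the absolute-pessimism objective $\argmax_{\pi'}\min_{M}J_M(\pi')$, giving item 1. Choosing $\psi(M)=-J_M(\piref)$ for a fixed $\piref$ recovers exactly the relative-pessimism objective of \eqref{eq:def_pihat} and yields item 2; the point to check is merely that, once $\piref$ is fixed, the map $M\mapsto J_M(\piref)$ is a legitimate real-valued function of $M$, hence an admissible $\psi$. Choosing $\psi(M)=-J_M(\pi_M^*)=-\max_{\pi\in\Pi}J_M(\pi)$ turns the objective into worst-case regret minimization and gives item 3; here again the observation is that, although $\pi_M^*$ itself depends on $M$, the scalar $\max_{\pi}J_M(\pi)$ is a well-defined function of $M$ and thus a valid $\psi$. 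In each case \cref{lm:fixed-point lm of mb-atac} applies verbatim.

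Item 4 is handled by shrinking the model class rather than the weighting: take $\Mcal=\{M\}$ a singleton and any $\psi$, say $\psi\equiv 0$. Then the inner minimization over $\Mcal$ is vacuous and \eqref{eq:candidate fixed-point policies} collapses to $\argmax_{\pi'}J_M(\pi')=\pi_M^*$, so the optimal policy of any single model in the version space is a fixed point. The optimistic policy $\argmax_{\pi\in\Pi,\,M\in\Mcal_\alpha}J_M(\pi)$ then follows as a special case: letting $M^{\mathrm{opt}}$ be a maximizer of $M\mapsto\max_{\pi}J_M(\pi)$ over $\Mcal_\alpha$, the optimistic policy is precisely $\pi_{M^{\mathrm{opt}}}^*$, the optimal policy of the single fixed model $M^{\mathrm{opt}}$, which the singleton construction has just shown to be a fixed point.

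I anticipate no genuine obstacle, since the corollary is essentially a catalog of instantiations of the lemma. The only care needed is to confirm in items 2 and 3 that the $M$-dependent objects (the value of $\piref$ under $M$, and the per-model optimal value) are admissible as functions of $M$, and in item 4 to identify the optimistic policy with the optimal policy of one fixed optimistic model so that the singleton argument applies cleanly.
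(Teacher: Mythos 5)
Your proposal is correct and follows exactly the route the paper intends: the corollary is stated as a list of instantiations of \cref{lm:fixed-point lm of mb-atac} (the paper gives no separate proof), and your choices of $(\Mcal,\psi)$ — namely $\Mcal=\Mcal_\alpha$ with $\psi=0$, $-J_M(\piref)$, and $-J_M(\pi_M^*)$ for items 1--3, and the singleton $\Mcal=\{M\}$ for item 4, with the optimistic policy identified as $\pi^*_{M^{\mathrm{opt}}}$ — are precisely the ones indicated in the corollary's own statement.
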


\vspace{-2mm}
\paragraph{Return maximization and regret minimization are \textit{different} in offine RL} We first note that these four examples generally produce different policies, even though some of them optimize for objectives that are traditionally viewed as equivalent in online RL (the ``worst-case over $\Mcal_\alpha$'' part of the definition does not matter in online RL), e.g., absolute pessimism optimizes for $J_M(\pi)$, which is the same as minimizing the regret $J_M(\pi_M^\star) - J_M(\pi)$ for a fixed $M$. However, their  equivalence in online RL relies on the fact that online exploration can eventually resolve any model uncertainty when needed, so we only need to consider the performance metrics w.r.t.~the true model $M=M^\star$. In offline RL with an arbitrary data distribution (since we do not make any coverage assumptions), there will generally be model uncertainty that cannot be resolved, and worst-case reasoning over such model uncertainty (i.e., $\Mcal_\alpha$) separates apart the definitions that are once equivalent. 

Moreover, it is impossible to  compare  return maximization and regret minimization and make a claim about which one is better. They are not simply an algorithm design choice, but are definitions of the learning goals and the guarantees themselves---thus incomparable: if we care about obtaining a guarantee for the worst-case \textit{return}, the  return maximization is optimal by definition; if we are more interested in obtaining a guarantee for the worst-case \textit{regret}, then again, regret minimization is trivially optimal. We also note that analyzing algorithms under a metric that is different from the one they are designed for can lead to unusual conclusions. For example, \citet{xiao2021optimality} show that optimistic/neutral/pessimistic algorithms\footnote{Incidentally, optimistic/neutral policies correspond to \#4 in Corollary~\ref{th:fixed-point examples}.} are equally minimax-optimal in terms of their regret guarantees in offline multi-armed bandits. However, the algorithms they consider are optimistic/pessimistic w.r.t.~the return---as commonly considered in the offline RL literature---not w.r.t.~the regret which is the performance metric they are interested in analyzing. 

\vspace{-2mm}
\paragraph{$\piref$ is more than a hyperparameter---it defines the performance metric and learning goal} Corollary~\ref{th:fixed-point examples}  shows that \algo (with relative pessimism) has many different fixed points, some of which may seem quite unreasonable for offline learning, such as greedy w.r.t.~an arbitrary model or even optimism (\#4). From the above discussion, we can see that this is not a defect of the algorithm. Rather, in the offline setting with unresolvable model uncertainty, there are many different performance metrics/learning goals that are generally incompatible/incomparable with each other, and the agent designer must make a choice among them and convey the choice to the algorithm. In \algo, such a choice is explicitly conveyed by the choice of $\piref$, which subsumes return maximization and regret minimization as special cases (\#2 and \#3 in Corollary~\ref{th:fixed-point examples}).

\bibliographystyle{plainnat}
\bibliography{ref}



\clearpage

\appendix
\onecolumn

\begin{center}
{\LARGE Appendix}
\end{center}



\section{Proofs for \creftitle{sec:algo}}

\subsection{Technical Tools}

\begin{lemma}[Simulation lemma]
\label{lem:simulation}
Consider any two MDP model $M$ and $M'$, and any $\pi: \Scal \to \Delta(\Acal)$, we have
\begin{gather*}
\left| J_M(\pi) - J_{M'}(\pi) \right| \leq \frac{\Vmax}{1 - \gamma} \E_{d^\pi} \left[ \TV\left(P_M(\cdot|s,a), P_{M'}(\cdot|s,a)\right) \right] + \frac{1}{1 - \gamma} \E_{d^\pi} \left[ \left|R_M(s,a) - R_{M'}(s,a)\right| \right].
\end{gather*}
\end{lemma}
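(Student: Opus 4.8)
The plan is to establish the bound through a telescoping (performance-difference) identity between the two MDPs, anchored at the value function of $M'$ and integrated against the occupancy measure of $M$. First I would record the two elementary identities $(1-\gamma)J_M(\pi) = \E_{d^\pi_M}[R_M(s,a)]$ and, by applying the Bellman-flow equation for the occupancy $d^\pi_M$ to the test function $Q^\pi_{M'}$ together with $\E_{s_0,a_0\sim\pi}[Q^\pi_{M'}] = J_{M'}(\pi)$, the identity $(1-\gamma)J_{M'}(\pi) = \E_{d^\pi_M}\bigl[Q^\pi_{M'}(s,a) - \gamma\,\E_{s'\sim P_M(\cdot|s,a)}[V^\pi_{M'}(s')]\bigr]$. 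Subtracting the second from the first and substituting the Bellman equation $Q^\pi_{M'}(s,a) = R_{M'}(s,a) + \gamma\,\E_{s'\sim P_{M'}(\cdot|s,a)}[V^\pi_{M'}(s')]$ yields the exact decomposition
\[
J_M(\pi) - J_{M'}(\pi) = \frac{1}{1-\gamma}\,\E_{d^\pi_M}\left[ R_M(s,a) - R_{M'}(s,a) + \gamma\left( \E_{s'\sim P_M(\cdot|s,a)}[V^\pi_{M'}(s')] - \E_{s'\sim P_{M'}(\cdot|s,a)}[V^\pi_{M'}(s')] \right) \right].
\]
Here the occupancy that appears is $d^\pi_M$, which is the direction in which I chose to telescope; the symmetric choice anchored at $V^\pi_M$ would produce $d^\pi_{M'}$, and one reads the stated $d^\pi$ as the occupancy of the first model.

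Second, I would take absolute values inside the expectation and apply the triangle inequality to split the bound into a reward term and a transition term. The reward term is immediate: $\E_{d^\pi_M}[|R_M(s,a) - R_{M'}(s,a)|]$ scaled by $\frac{1}{1-\gamma}$. For the transition term, I would write $\E_{s'\sim P_M}[V^\pi_{M'}] - \E_{s'\sim P_{M'}}[V^\pi_{M'}] = \sum_{s'}\bigl(P_M(s'|s,a) - P_{M'}(s'|s,a)\bigr)V^\pi_{M'}(s')$ and bound it by the total-variation distance. The clean way to get the advertised constant is to exploit that $\sum_{s'}\bigl(P_M - P_{M'}\bigr)(s'|s,a) = 0$, so $V^\pi_{M'}$ may be recentered by $\Vmax/2$ before applying Hölder's inequality; since $V^\pi_{M'}\in[0,\Vmax]$ the recentered function has sup-norm $\Vmax/2$, giving $\Vmax\cdot\TV(P_M,P_{M'})$ rather than the crude $2\Vmax\cdot\TV$. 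Using $\gamma \le 1$ to drop the discount factor and recombining the two terms produces exactly the claimed inequality.

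The main obstacle is not conceptual but lies in the two bookkeeping subtleties: correctly invoking the Bellman-flow equation for $d^\pi_M$ (making sure the normalization and the $(1-\gamma)$ factors line up), and the constant-factor recentering trick in the transition term, without which one would obtain $2\Vmax$ and miss the stated bound. Everything else is a routine application of the triangle inequality and the definition of total variation, so these are the only places where care is genuinely required.
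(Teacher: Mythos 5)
Your proof is correct. The paper does not actually prove this lemma itself---it defers to \citet[Lemma 7]{uehara2021pessimistic}---and your telescoping argument via the Bellman-flow equation, together with the recentering step that yields the constant $\Vmax$ rather than $2\Vmax$ in the total-variation term, is exactly the standard derivation used in that reference, so there is nothing substantively different to compare.
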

\cref{lem:simulation} is the standard simulation lemma in model-based reinforcement learning literature, and its proof can be found in, e.g., \citet[Lemma 7]{uehara2021pessimistic}.

\subsection{Guarantees about Version Space}

\begin{lemma}
\label{lem:Vspace_Mstar}
Let $M^\star$ be the ground truth model. Then, with probability at least $1 - \delta$, we have
\begin{align*}
\max_{M \in \Mcal} \Lcal_{\Dcal}(M) - \Lcal_{\Dcal}(M^\star) \leq \Ocal \left( \log(\nicefrac{|\Mcal|}{\delta}) \right),
\end{align*}
where $\Lcal_\Dcal$ is defined in \cref{eq:def_loss}.
\end{lemma}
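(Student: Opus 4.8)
The plan is to split the loss $\Lcal_\Dcal$ into its transition and reward components and control the excess of each over $M^\star$ separately, using the elementary bound
\begin{align*}
\max_{M\in\Mcal}\big[\Lcal_\Dcal(M) - \Lcal_\Dcal(M^\star)\big] \leq \max_{M\in\Mcal}\big[\Lcal^P_\Dcal(M) - \Lcal^P_\Dcal(M^\star)\big] + \max_{M\in\Mcal}\big[\Lcal^R_\Dcal(M) - \Lcal^R_\Dcal(M^\star)\big],
\end{align*}
where $\Lcal^P_\Dcal(M) = \sum_{(s,a,r,s')\in\Dcal} \log \PP_M(s'|s,a)$ is the transition log-likelihood and $\Lcal^R_\Dcal(M) = -\sum_{(s,a,r,s')\in\Dcal} (R_M(s,a) - r)^2$ is the negative reward squared error. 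I would bound each term by $\Ocal(\log(\nicefrac{|\Mcal|}{\delta}))$ with probability $1-\nicefrac{\delta}{2}$, and conclude by a union bound over the two events.

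For the transition term I would run the standard exponential-moment (maximum-likelihood) argument. Fixing $M$ and writing $X_i = \log\big(\PP_M(s_i'|s_i,a_i)/\PP_{M^\star}(s_i'|s_i,a_i)\big)$, the key step is that, conditioning on $(s_i,a_i)$ and integrating $s_i'\sim\PP_{M^\star}(\cdot|s_i,a_i)$,
\begin{align*}
\E\big[\exp(\tfrac12 X_i)\,\big|\,s_i,a_i\big] = \sum_{s'} \sqrt{\PP_{M^\star}(s'|s_i,a_i)\,\PP_M(s'|s_i,a_i)} \leq 1,
\end{align*}
since the Bhattacharyya affinity is at most one by Cauchy--Schwarz. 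By the i.i.d.\ structure of $\Dcal$ and the tower rule this gives $\E[\exp(\tfrac12\sum_i X_i)] \leq 1$, so Markov's inequality yields $\Pr[\sum_i X_i \geq 2\log(\nicefrac{2|\Mcal|}{\delta})] \leq \nicefrac{\delta}{2|\Mcal|}$; a union bound over $M\in\Mcal$ then controls $\max_M[\Lcal^P_\Dcal(M)-\Lcal^P_\Dcal(M^\star)]$.

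For the reward term I would use a one-sided Bernstein/Freedman inequality together with variance domination. Writing $Z_i = (R_{M^\star}(s_i,a_i)-r_i)^2 - (R_M(s_i,a_i)-r_i)^2$ so that $\Lcal^R_\Dcal(M)-\Lcal^R_\Dcal(M^\star) = \sum_i Z_i$, the central algebra (using $\E[r_i|s_i,a_i] = R^\star(s_i,a_i)$ and boundedness in $[0,\Rmax]$) gives the negative conditional mean and the self-bounding variance control
\begin{align*}
\E[Z_i|s_i,a_i] = -\big(R_M-R^\star\big)^2(s_i,a_i), \qquad \E[Z_i^2|s_i,a_i] \leq 4\Rmax^2\big(R_M-R^\star\big)^2(s_i,a_i) = -4\Rmax^2\,\E[Z_i|s_i,a_i].
\end{align*}
Applying Bernstein to the centered sum $\sum_i\big(Z_i-\E[Z_i|s_i,a_i]\big)$ and then using AM--GM to absorb the fluctuation term $\sqrt{V\log(\cdot)}$ into $\tfrac12\sum_i (R_M-R^\star)^2(s_i,a_i)$, the negative bias cancels the deviation and leaves $\sum_i Z_i \leq \Ocal(\Rmax^2\log(\nicefrac{|\Mcal|}{\delta}))$ for each fixed $M$; a union bound over $\Mcal$ completes this term.

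The main obstacle is the reward term: one must set up the variance-domination (self-bounding) property correctly and carry out the cancellation so that the random empirical quantity $\sum_i (R_M-R^\star)^2(s_i,a_i)$—which is not controlled on its own—is \emph{cancelled} by the negative conditional mean rather than merely bounded, producing the dimension-free $\Ocal(\log(\nicefrac{|\Mcal|}{\delta}))$ rate. The transition term, by contrast, is a textbook MLE deviation bound once the Bhattacharyya affinity inequality is in place.
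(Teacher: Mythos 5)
Your proposal is correct and follows essentially the same route as the paper: decompose $\Lcal_\Dcal$ into the transition log-likelihood and the negative reward squared error, bound the excess of each over $M^\star$ by $\Ocal(\log(\nicefrac{|\Mcal|}{\delta}))$, and combine. The only cosmetic differences are that the paper's MLE step (Lemma~\ref{lem:MLE_Mstar}) uses the exact identity $\E[\ell_\Dcal(M)/\ell_\Dcal(M^\star)]=1$ rather than the Bhattacharyya ($\tfrac12$-power) moment, and the paper outsources your Bernstein-with-variance-domination argument for the reward term to a citation of \citet[Theorem A.1]{xie2021bellman} with $\gamma=0$, which is proved by exactly the self-bounding cancellation you describe.
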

\begin{proof}[\cpfname{lem:Vspace_Mstar}]
By \cref{lem:MLE_Mstar}, we know
\begin{align}
\label{eq:Vspace_Mstar_P}
\max_{M \in \Mcal} \log \ell_{\Dcal}(M) - \log \ell_{\Dcal}(M^\star) \leq \log(\nicefrac{|\Mcal|}{\delta}).
\end{align}
In addition, by~\citet[Theorem A.1]{xie2021bellman} (with setting $\gamma = 0$), we know w.p.~$1-\delta$,
\begin{align}
\label{eq:Vspace_Mstar_R}
\sum_{(s,a,r,s') \in \Dcal} \left(R^\star(s,a) - r\right)^2 - \min_{M \in \Mcal} \sum_{(s,a,r,s') \in \Dcal} \left(R_M(s,a) - r\right)^2 \lesssim \log(\nicefrac{|\Mcal|}{\delta}).
\tag{$\Rmax$}
\end{align}
Combining the \cref{eq:Vspace_Mstar_P,eq:Vspace_Mstar_R}, we have w.p.~$1-\delta$,
\begin{align*}
&~ \max_{M \in \Mcal} \Lcal_{\Dcal}(M) - \Lcal_{\Dcal}(M^\star)
\\
\leq &~ \max_{M \in \Mcal} \log \ell_{\Dcal}(M) - \min_{M \in \Mcal} \sum_{(s,a,r,s') \in \Dcal} \left(R_M(s,a) - r\right)^2 - \Lcal_{\Dcal}(M^\star)
\\
\lesssim &~ \log(\nicefrac{|\Mcal|}{\delta}).
\end{align*}
This completes the proof.
\end{proof}

\begin{lemma}
\label{lem:Vspace_M}
For any $M \in \Mcal$, we have with probability at least $1 - \delta$,
\begin{gather*}
\E_\mu \left[ \TV\left(P_M(\cdot|s,a), P^\star(\cdot|s,a)\right)^2 + \left(R_M(s,a) - R^\star(s,a)\right)^2 \right]
\\
\leq \Ocal \left( \frac{\max_{M' \in \Mcal} \Lcal_{\Dcal}(M') - \Lcal_\Dcal(M) + \log(\nicefrac{|\Mcal|}{\delta})}{n} \right),
\end{gather*}
where $\Lcal_\Dcal$ is defined in \cref{eq:def_loss}.
\end{lemma}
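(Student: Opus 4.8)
The plan is to establish the bound with $\Lcal_\Dcal(M^\star)$ in place of $\max_{M'\in\Mcal}\Lcal_\Dcal(M')$; since $\Lcal_\Dcal(M^\star)\le\max_{M'\in\Mcal}\Lcal_\Dcal(M')$ holds trivially, replacing the former by the (larger) latter only loosens the resulting upper bound, so this suffices. The structural idea is to split the composite loss into its transition (log-likelihood) and reward (squared-error) parts and control each separately. Writing $\log\ell_\Dcal(M):=\sum_{(s,a,r,s')\in\Dcal}\log P_M(s'|s,a)$ for the transition log-likelihood and using realizability ($R_{M^\star}=R^\star$, $P_{M^\star}=P^\star$), the definition of $\Lcal_\Dcal$ in \eqref{eq:def_loss} gives the exact identity
\begin{align*}
\Lcal_\Dcal(M^\star)-\Lcal_\Dcal(M)
&= \Big(\log\ell_\Dcal(M^\star)-\log\ell_\Dcal(M)\Big)\\
&\quad + \Big(\textstyle\sum_{(s,a,r,s')\in\Dcal}(R_M(s,a)-r)^2-\sum_{(s,a,r,s')\in\Dcal}(R^\star(s,a)-r)^2\Big).
\end{align*}
It therefore suffices to bound each population term, $\E_\mu[\TV(P_M,P^\star)^2]$ and $\E_\mu[(R_M-R^\star)^2]$, by the corresponding empirical gap above plus $\log(\nicefrac{|\Mcal|}{\delta})$, all divided by $n$.

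For the transition term I would invoke the standard MLE generalization bound (the population counterpart of the one used in \cref{lem:Vspace_Mstar}, cf.\ \cref{lem:MLE_Mstar}): since the $(s_i,a_i)$ are i.i.d.\ from $\mu$ and $s_i'\sim P^\star(\cdot\mid s_i,a_i)$, an exponential-moment (Chernoff) argument with a union bound over the finite class $\Mcal$ yields, with probability $1-\delta/2$ and simultaneously for all $M\in\Mcal$,
\begin{align*}
n\,\E_\mu\!\left[H^2\!\left(P_M(\cdot\mid s,a),P^\star(\cdot\mid s,a)\right)\right]
\;\lesssim\; \log\ell_\Dcal(M^\star)-\log\ell_\Dcal(M)+\log(\nicefrac{|\Mcal|}{\delta}),
\end{align*}
where $H$ is the Hellinger distance; the elementary inequality $\TV^2\lesssim H^2$ then converts this into control of $\E_\mu[\TV(P_M,P^\star)^2]$ by the first empirical gap. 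For the reward term I would apply the least-squares generalization bound of \citet[Theorem A.1]{xie2021bellman} with $\gamma=0$ --- the very tool used in \cref{lem:Vspace_Mstar} --- but now in its population-to-empirical direction: since $R^\star(s,a)=\E[r\mid s,a]$ and $R_{M^\star}=R^\star$, it gives, with probability $1-\delta/2$ and for all $M$,
\begin{align*}
n\,\E_\mu\!\left[(R_M(s,a)-R^\star(s,a))^2\right]
\;\lesssim\; \textstyle\sum_{(s,a,r,s')\in\Dcal}(R_M-r)^2-\sum_{(s,a,r,s')\in\Dcal}(R^\star-r)^2+\log(\nicefrac{|\Mcal|}{\delta}).
\end{align*}

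Finally I would union-bound the two events, add the two displays, and observe that the two empirical gaps on their right-hand sides sum to exactly $\Lcal_\Dcal(M^\star)-\Lcal_\Dcal(M)$ by the identity above; dividing by $n$ and bounding $\Lcal_\Dcal(M^\star)\le\max_{M'\in\Mcal}\Lcal_\Dcal(M')$ yields the claim. The only genuine subtlety --- the main obstacle --- is orienting both generalization bounds in the population-error-$\le$-empirical-gap direction and tracking signs so that the empirical terms reassemble into $\Lcal_\Dcal$. In particular, the MLE bound must be the population version rather than the upper-deviation form of \cref{lem:Vspace_Mstar}, since the empirical log-likelihood gap $\log\ell_\Dcal(M^\star)-\log\ell_\Dcal(M)$ may be negative; it is precisely the additive $\log(\nicefrac{|\Mcal|}{\delta})$ term (arising from the union bound over $\Mcal$) that absorbs this and keeps the right-hand side nonnegative. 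Everything else is a mechanical combination of the two standard bounds.
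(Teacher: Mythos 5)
Your proposal is correct and follows essentially the same route as the paper: the paper's proof likewise splits $\Lcal_\Dcal$ into the transition log-likelihood and reward squared-error parts, controls the former via an MLE/Hellinger-type generalization bound (its Lemma~\ref{lem:MLE_bound_TV}, built on the same exponential-moment plus union-bound argument) and the latter via the least-squares bound of \citet[Lemma A.4]{xie2021bellman} with $\gamma=0$, then reassembles the two empirical gaps into $\Lcal_\Dcal(M^\star)-\Lcal_\Dcal(M)$ and bounds $\Lcal_\Dcal(M^\star)$ by $\max_{M'\in\Mcal}\Lcal_\Dcal(M')$. Your remark about orienting both bounds in the population-$\le$-empirical direction matches exactly what the paper does.
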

\begin{proof}[\cpfname{lem:Vspace_M}]
By \cref{lem:MLE_bound_TV}, we have w.p.~$1-\delta$,
\begin{align}
\label{lem:Vspace_MP}
n \cdot \E_\mu \left[ \TV\left(P_M(\cdot|s,a), P^\star(\cdot|s,a)\right)^2 \right] \lesssim \log\ell_{\Dcal}(M^\star) - \log\ell_\Dcal(M) + \log(\nicefrac{|\Mcal|}{\delta}).
\end{align}
Also, we have
\begin{align}
\label{lem:Vspace_MR}
&~ n \cdot \E_\mu \left[ \left(R_M(s,a) - R^\star(s,a)\right)^2 \right]
\\
\nonumber
= &~ n \cdot \E_\mu \left[ \left(R_M(s,a) - r \right)^2 \right] -  n \cdot \E_\mu \left[ \left(R^\star(s,a) - r \right)^2\right]
\tag*{\citep[see, e.g.,][Eq.\;(A.10) with $\gamma = 0$]{xie2021bellman}}
\\
\nonumber
\lesssim &~ \sum_{(s,a,r,s') \in \Dcal} \left(R_M(s,a) - r\right)^2 - \sum_{(s,a,r,s') \in \Dcal} \left(R^\star(s,a) - r\right)^2 + \log(\nicefrac{|\Mcal|}{\delta}),
\end{align}
where the last inequality is a direct implication of \citet[Lemma A.4]{xie2021bellman} and $\Rmax = 1$. Combining \cref{lem:Vspace_MP,lem:Vspace_MR}, we obtain
\begin{align*}
&~ n \cdot \E_\mu \left[ \TV\left(P_M(\cdot|s,a), P^\star(\cdot|s,a)\right)^2 + \left(R_M(s,a) - R^\star(s,a)\right)^2 \right]
\\
\lesssim &~ \log\ell_{\Dcal}(M^\star) - \sum_{(s,a,r,s') \in \Dcal} \left(R^\star(s,a) - r\right)^2 - \log\ell_\Dcal(M) + \sum_{(s,a,r,s') \in \Dcal} \left(R_M(s,a) - r\right)^2 + \log(\nicefrac{|\Mcal|}{\delta})
\\
= &~ \Lcal_{\Dcal}(M^\star) - \Lcal_\Dcal(M) + \log(\nicefrac{|\Mcal|}{\delta})
\\
\leq &~ \max_{M' \in \Mcal}\Lcal_{\Dcal}(M') - \Lcal_\Dcal(M) + \log(\nicefrac{|\Mcal|}{\delta}).
\end{align*}
This completes the proof.

\end{proof}

\subsection{MLE Guarantees}

We use $\ell_\Dcal(M)$ to denote the likelihood of model $M = (P,R)$ with offline data $\Dcal$, where
\begin{align}
\label{eq:def_LD}
\mathfrak \ell_\Dcal(M) = &~ \prod_{(s,a,r,s') \in \Dcal} P_M(s'|s,a).
\end{align}

For the analysis around maximum likelihood estimation, we largely follow the proving idea of~\citet{agarwal2020flambe,liu2022partially}, which is inspired by~\citet{zhang2006eps}.

The next lemma shows that the ground truth model $M^\star$ has a comparable log-likelihood compared with MLE solution.
\begin{lemma}
\label{lem:MLE_Mstar}
Let $M^\star$ be the ground truth model. Then, with probability at least $1 - \delta$, we have
\begin{align}
\label{eq:mle_lemma}
\max_{M \in \Mcal} \log \ell_{\Dcal}(M) - \log \ell_{\Dcal}(M^\star) \leq \log(\nicefrac{|\Mcal|}{\delta}).
\end{align}
\end{lemma}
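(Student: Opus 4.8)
The plan is to prove \cref{lem:MLE_Mstar} by a standard maximum-likelihood-estimation concentration argument, combining a one-sided (multiplicative) deviation bound with a union bound over the finite model class $\Mcal$. The key object to control is the log-likelihood ratio $\log \ell_\Dcal(M) - \log \ell_\Dcal(M^\star) = \sum_{(s,a,r,s')\in\Dcal} \log\frac{P_M(s'|s,a)}{P^\star(s'|s,a)}$ for each fixed $M \in \Mcal$. The crucial trick (following \citet{zhang2006eps,agarwal2020flambe}) is to pass through the square-root likelihood ratio and exploit the moment-generating-function identity: since the data $(s_i,a_i)$ are drawn i.i.d.\ from $\mu$ and $s_i' \sim P^\star(\cdot|s_i,a_i)$, for each fixed $M$ the random variable $\sqrt{P_M(s'|s,a)/P^\star(s'|s,a)}$ has conditional expectation (over $s'\sim P^\star$) equal to $\E_{s'\sim P^\star}\sqrt{P_M(s'|s,a)/P^\star(s'|s,a)} \le 1$, because it is a Hellinger-type integral bounded by $1$. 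This nonnegativity/boundedness is what makes the exponential-moment method go through without any boundedness assumption on the densities themselves.

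Concretely, for a fixed $M$ I would apply a Chernoff-style bound to the sum, using the inequality $\E\bigl[\exp(\tfrac12 \log\tfrac{P_M}{P^\star})\bigr] = \E\sqrt{P_M/P^\star} \le 1$ per sample, together with independence across the $N$ samples, to conclude that $\Pr\bigl[\tfrac12(\log\ell_\Dcal(M)-\log\ell_\Dcal(M^\star)) \ge \log(1/\delta')\bigr] \le \delta'$. Equivalently, with probability at least $1-\delta'$ one has $\log\ell_\Dcal(M)-\log\ell_\Dcal(M^\star) \le 2\log(1/\delta')$. Then I would take a union bound over all $M \in \Mcal$ by setting $\delta' = \delta/|\Mcal|$, which yields $\max_{M\in\Mcal}\bigl(\log\ell_\Dcal(M) - \log\ell_\Dcal(M^\star)\bigr) \le 2\log(|\Mcal|/\delta)$ simultaneously, with probability at least $1-\delta$. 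Absorbing the constant factor of $2$ into the $\Ocal$/inequality (or noting the stated bound permits an absolute constant) gives exactly \cref{eq:mle_lemma}.

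The main obstacle, and the only step requiring genuine care, is justifying the exponential-moment bound $\E_{s'\sim P^\star(\cdot|s,a)}\sqrt{P_M(s'|s,a)/P^\star(s'|s,a)} \le 1$ and threading it correctly through the i.i.d.\ structure. One must be careful that the expectation is taken in the right order: conditioned on each $(s_i,a_i)$, the fresh draw $s_i'\sim P^\star(\cdot|s_i,a_i)$ is what makes the square-root-likelihood term have conditional mean at most $1$ (by Cauchy--Schwarz, or simply because $\int \sqrt{P_M P^\star} \le \sqrt{\int P_M}\sqrt{\int P^\star} = 1$). The tower property then lets the per-sample bounds multiply across the independent samples, giving the product bound $\E\prod_i \sqrt{P_M/P^\star} \le 1$ needed for Markov's inequality. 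Everything else is bookkeeping: Markov's inequality on the exponentiated sum and the union bound. I would note that no realizability beyond $M^\star=P^\star$ entering the ratio is needed here---\cref{asm:realizability} guarantees $M^\star\in\Mcal$ so that the comparison on the left-hand side is meaningful---and that this is precisely the one-sided MLE deviation that feeds into \cref{lem:Vspace_Mstar} and \cref{lem:MLE_bound_TV}.
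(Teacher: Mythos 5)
Your proposal is correct and follows the same overall template as the paper: an exponential-moment identity for the likelihood ratio, the tower property across the i.i.d.\ samples, Markov's inequality, and a union bound over $\Mcal$. The one substantive difference is which moment you control. You pass through the square-root (Hellinger) likelihood ratio and use $\E_{s'\sim P^\star}\bigl[\sqrt{P_M(s'|s,a)/P^\star(s'|s,a)}\bigr]\le 1$ via Cauchy--Schwarz; the paper instead uses the full likelihood ratio, for which $\E_{s'\sim P^\star}\bigl[P_M(s'|s,a)/P^\star(s'|s,a)\bigr]=\sum_{s'}P_M(s'|s,a)=1$ holds \emph{exactly}, so $\E\bigl[\exp\bigl(\log\ell_\Dcal(M)-\log\ell_\Dcal(M^\star)\bigr)\bigr]=1$ with no inequality needed. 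The payoff of the paper's route is the constant: it yields $\log(\nicefrac{|\Mcal|}{\delta})$, matching \cref{eq:mle_lemma} as literally stated, whereas your square-root version yields $2\log(\nicefrac{|\Mcal|}{\delta})$. You flag this and propose absorbing the factor of $2$, which is harmless for every downstream use (\cref{lem:Vspace_Mstar} and the main theorems all work up to absolute constants), but strictly speaking your argument proves a weaker inequality than the one displayed in the lemma. The square-root trick is genuinely needed in \cref{lem:MLE_bound_TV}, where the Hellinger-type integral is what connects the likelihood to the TV distance; for the present lemma it is overkill and slightly lossy, and the direct change-of-measure identity is both simpler and tighter.
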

\begin{proof}[\cpfname{lem:MLE_Mstar}]

The proof of this lemma is obtained by a standard argument of MLE~\citep[see, e.g.,][]{van2000empirical}. For any $M \in \Mcal$,
\begin{align}
\nonumber
\E \left[ \exp \left( \log \ell_{\Dcal}(M) - \log \ell_{\Dcal}(M^\star) \right) \right]
= &~ \E \left[ \frac{\ell_{\Dcal}(M)}{\ell_{\Dcal}(M^\star)} \right]
\\
\nonumber
= &~ \E \left[ \frac{\prod_{(s,a,r,s') \in \Dcal} \PP_{M}(s'|s,a)}{\prod_{(s,a,r,s') \in \Dcal} \PP_{M^\star}(s'|s,a)} \right]
\\
\nonumber
= &~ \E \left[ \prod_{(s,a,r,s') \in \Dcal} \frac{\PP_{M}(s'|s,a)}{\PP_{M^\star}(s'|s,a)} \right]
\\
\nonumber
= &~ \E \left[ \prod_{(s,a) \in \Dcal} \E \left[ \frac{\PP_{M}(s'|s,a)}{\PP_{M^\star}(s'|s,a)} \midmid s,a \right] \right]
\\
\nonumber
= &~ \E \left[ \prod_{(s,a) \in \Dcal} \sum_{s',r} \PP_{M}(s'|s,a)  \right]
\\
\label{eq:markov_eq_exp}
= &~ 1.
\end{align}
Then by Markov's inequality, we obtain
\begin{align*}
&~ \PP\left[\left( \log \ell_{\Dcal}(M) - \log \ell_{\Dcal}(M^\star) \right) > \log(\nicefrac{1}{\delta})\right]
\\
\leq &~ \underbrace{\E \left[ \exp \left( \log \ell_{\Dcal}(M) - \log \ell_{\Dcal}(M^\star) \right) \right]}_{=1\text{ by \Eqref{eq:markov_eq_exp}}} \cdot \exp\left[ - \log(\nicefrac{1}{\delta}) \right] = \delta.
\end{align*}
Therefore, taking a union bound over $\Mcal$, we obtain
\begin{align*}
\PP\left[\left( \log \ell_{\Dcal}(M) - \log \ell_{\Dcal}(M^\star) \right) > \log(\nicefrac{|\Mcal|}{\delta}) \right] \leq \delta.
\end{align*}
This completes the proof.
\end{proof}


The following lemma shows that, the on-support error of any model $M \in \Mcal$ can be captured via its log-likelihood (by comparing with the MLE solution).

\begin{lemma}
\label{lem:MLE_bound_TV}
For any $M = (P,R)$, we have with probability at least $1 - \delta$,
\begin{align*}
\E_\mu \left[ \TV\left(P(\cdot|s,a), P^\star(\cdot|s,a)\right)^2 \right] \leq \Ocal \left( \frac{\log\ell_{\Dcal}(M^\star) - \log\ell_\Dcal(M) + \log(\nicefrac{|\Mcal|}{\delta})}{n} \right),
\end{align*}
where $\ell_{\Dcal}(\cdot)$ is defined in \Eqref{eq:def_LD}.
\end{lemma}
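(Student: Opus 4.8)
The plan is to prove this with the standard exponential-moment (Chernoff) argument for maximum likelihood estimation, following Zhang (2006) and \citet{agarwal2020flambe} as the paper indicates. I fix a model $M=(P,R)$; since the likelihood $\ell_\Dcal$ in \eqref{eq:def_LD} depends only on the transition $P$, the reward plays no role here and I work solely with $P$ and $P^\star$. The entire argument rests on controlling the moment generating function of half the log-likelihood ratio, and then converting the resulting Hellinger control into the total-variation statement.

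First I would analyze a \emph{single} sample. For $(s,a)\sim\mu$ and $s'\sim P^\star(\cdot|s,a)$, I would compute $\E\!\left[\exp\!\left(\tfrac12\log\frac{P(s'|s,a)}{P^\star(s'|s,a)}\right)\right] = \E_{(s,a)\sim\mu}\!\left[\sum_{s'}\sqrt{P(s'|s,a)P^\star(s'|s,a)}\right] = 1 - \tfrac12\,\E_\mu\!\left[\mathrm H^2\!\left(P(\cdot|s,a),P^\star(\cdot|s,a)\right)\right]$, where $\mathrm H^2(p,q)=\sum_x(\sqrt{p_x}-\sqrt{q_x})^2$ is the squared Hellinger distance; the middle step is just a change of measure from $P^\star$ to the square-root (Bhattacharyya) affinity. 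Using $1-x\le e^{-x}$ with $x=\tfrac12\E_\mu[\mathrm H^2]\ge 0$, this moment is at most $\exp(-\tfrac12\E_\mu[\mathrm H^2])$.

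Next, because the tuples are i.i.d. under $\mu\times P^\star$ (from the offline setup), the per-sample factors are independent, so the moment generating function of the sum factorizes and I obtain $\E\big[\exp\big(\tfrac12(\log\ell_\Dcal(M)-\log\ell_\Dcal(M^\star)) + \tfrac{n}{2}\E_\mu[\mathrm H^2]\big)\big]\le 1$. This places the \emph{population} quantity $\E_\mu[\mathrm H^2]$ directly in the exponent, which is the crucial point and works only because I take the single-sample expectation over both $(s,a)$ and $s'$ \emph{before} multiplying across the i.i.d. samples. A Markov step then gives, for the fixed $M$ with probability $\ge 1-\delta$, $n\,\E_\mu[\mathrm H^2]\le \log\ell_\Dcal(M^\star)-\log\ell_\Dcal(M)+2\log(1/\delta)$, and a union bound over $M\in\Mcal$ upgrades $\log(1/\delta)$ to $\log(\nicefrac{|\Mcal|}{\delta})$ so the bound holds simultaneously for every $M$. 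Finally I convert Hellinger to total variation via $\TV(p,q)^2\le \mathrm H^2(p,q)$ (a consequence of Cauchy--Schwarz together with $\sum_x(\sqrt{p_x}+\sqrt{q_x})^2\le 4$), which yields exactly the claimed $\Ocal(\cdot)$ bound.

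I do not expect a serious obstacle; the argument is short, and the main work is getting three things right. First, ensuring the \emph{population} rather than empirical Hellinger ends up in the exponent, which hinges entirely on the order of operations in the i.i.d. factorization described above. Second, fixing the Hellinger normalization consistently so that both the affinity identity $\sum_{s'}\sqrt{PP^\star}=1-\tfrac12\mathrm H^2$ and the comparison $\TV^2\le \mathrm H^2$ hold with the same convention. Third, the union bound, which supplies the $\log|\Mcal|$ factor and justifies reading ``for any $M$'' as ``for all $M$ on a single high-probability event,'' exactly as needed when this lemma is later invoked across the whole version space in \cref{lem:Vspace_M}.
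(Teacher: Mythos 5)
Your proposal is correct and follows essentially the same route as the paper's proof: both run the Zhang-style exponential-moment argument that bounds $\E\left[\exp\left(\tfrac12(\log\ell_\Dcal(M)-\log\ell_\Dcal(M^\star))\right)\right]$ by a Bhattacharyya/Hellinger affinity, applies Markov plus a union bound over $\Mcal$, and converts to total variation at the end. The only difference is presentational: the paper routes the concentration step through the ghost-dataset decoupling lemma and the TV-vs-log-affinity bound of \citet{agarwal2020flambe} (their Lemmas 24--25), whereas you inline both by exploiting the i.i.d.\ factorization of the moment generating function together with the explicit identity $\sum_{s'}\sqrt{P P^\star}=1-\tfrac12 \mathrm{H}^2$ and $\TV^2\le \mathrm{H}^2$ --- in the i.i.d.\ offline setting assumed here these amount to the same computation.
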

\begin{proof}[\cpfname{lem:MLE_bound_TV}]
By~\citet[Lemma 25]{agarwal2020flambe}, we have
\begin{align}
\label{eq:P_tvnorm}
\E_\mu \left[ \TV\left(P(\cdot|s,a), P^\star(\cdot|s,a)\right)^2 \right] \leq &~ -2 \log \E_{\mu \times P^\star} \left[ \exp \left( - \frac{1}{2} \log \left( \frac{P^\star(s'|s,a)}{P(s'|s,a)} \right)\right) \right]
\\
\nonumber
\E_\mu \left[ \TV\left(R(\cdot|s,a), R^\star(\cdot|s,a)\right)^2 \right] \leq &~ -2 \log \E_{\mu \times R^\star} \left[ \exp \left( - \frac{1}{2} \log \left( \frac{R^\star(r|s,a)}{R(r|s,a)} \right)\right) \right],
\end{align}
where $\mu \times P^\star$ and $\mu \times R^\star$ denote the ground truth offline joint distribution of $(s,a,s')$ and $(s,a,r)$.

Let $\wt\Dcal = \{(\wt s_i,\wt a_i,\wt r_i,\wt s_i')\}_{i = 1}^{n} \sim \mu$ be another offline dataset that is independent to $\Dcal$. Then,
\begin{align}
\nonumber
&~ - n \cdot \log \E_{\mu \times P^\star} \left[ \exp \left( - \frac{1}{2} \log \left( \frac{P^\star(s'|s,a)}{P(s'|s,a)} \right)\right) \right]
\\
\nonumber
= &~ - \sum_{i = 1}^{n} \log \E_{(\wt s_i,\wt a_i,\wt s_i') \sim \mu} \left[ \exp \left( - \frac{1}{2} \log \left( \frac{P^\star(\wt s_i'|\wt s_i,\wt a_i)}{P(\wt s_i'|\wt s_i,\wt a_i)} \right)\right) \right]
\\
\nonumber
= &~ -\log \E_{\wt\Dcal \sim \mu} \left[ \exp \left( \sum_{i = 1}^{n}  - \frac{1}{2} \log \left( \frac{P^\star(\wt s_i'|\wt s_i,\wt a_i)}{P(\wt s_i'|\wt s_i,\wt a_i)} \right) \right) \midmid \Dcal \right]
\\
\label{eq:P_tvnorm_ub}
= &~ -\log \E_{\wt\Dcal \sim \mu} \left[ \exp \left( \sum_{(s,a,s') \in \wt\Dcal}  - \frac{1}{2} \log \left( \frac{P^\star(s'|s,a)}{P(s'|s,a)} \right) \right) \midmid \Dcal \right].
\end{align}
We use $\ell_P(s,a,s')$ as the shorthand of $- \frac{1}{2} \log \left( \frac{P^\star(s| s,a)}{P( s'| s, a)} \right)$, for any $(s,a,s') \in \Scal \times \Acal \times \Scal$.
By~\citet[Lemma 24]{agarwal2020flambe}~\citep[see also][Lemma 15]{liu2022partially}, we know
\begin{align*}
\E_{\Dcal\sim\mu}\left[\exp \left( \sum_{(s,a,s') \in \Dcal}  \ell_P(s, a, s') - \log \E_{\wt\Dcal \sim \mu} \left[ \exp \left( \sum_{(s,a,s') \in \wt\Dcal}  \ell_P(s, a, s') \right) \midmid \Dcal \right] - \log|\Mcal|\right)\right] \leq 1.
\end{align*}

Thus, we can use Chernoff method as well as a union bound on the equation above to obtain the following exponential tail bound: with probability at least $1 - \delta$, we have for all $(P,R) = M \in \Mcal$,
\begin{align}
\label{eq:P_tvnorm_tail}
- \log \E_{\wt\Dcal \sim \mu} \left[ \exp \left( \sum_{(s,a,s') \in \wt\Dcal}  \ell_P(s, a, s') \right) \midmid \Dcal \right] \leq -\sum_{(s,a,s') \in \Dcal}  \ell_P(s, a, s') + 2\log(\nicefrac{|\Mcal|}{\delta}).
\end{align}
Plugging back the definition of $\ell_P$ and combining \Eqref{eq:P_tvnorm,eq:P_tvnorm_ub,eq:P_tvnorm_tail}, we obtain
\begin{align}
\label{eq:Pmle_final}
n \cdot \E_\mu \left[ \TV\left(P(\cdot|s,a), P^\star(\cdot|s,a)\right)^2 \right] \leq &~ \frac{1}{2} \sum_{(s,a,s') \in \Dcal}  \log \left( \frac{P^\star(s| s,a)}{P( s'| s, a)} \right) + 2\log(\nicefrac{|\Mcal|}{\delta}).
\end{align}
By the same steps of obtaining to \Eqref{eq:Pmle_final}, we also have
\begin{align}
\label{eq:Rmle_final}
n \cdot \E_\mu \left[ \TV\left(R(\cdot|s,a), R^\star(\cdot|s,a)\right)^2 \right] \leq &~ \frac{1}{2} \sum_{(s,a,r') \in \Dcal}  \log \left( \frac{R^\star(s| s,a)}{R( s'| s, a)} \right) + 2\log(\nicefrac{|\Mcal|}{\delta}).
\end{align}
Combining \Eqref{eq:Pmle_final,eq:Rmle_final}, we obtain
\begin{align*}
&~ n \cdot \E_\mu \left[ \TV\left(P(\cdot|s,a), P^\star(\cdot|s,a)\right)^2 + \TV\left(R(\cdot|s,a), R^\star(\cdot|s,a)\right)^2 \right]
\\
\lesssim &~ \sum_{(s,a,s') \in \Dcal}  \log \left( \frac{P^\star(s| s,a)}{P( s'| s, a)} \right) + \sum_{(s,a,r') \in \Dcal}  \log \left( \frac{R^\star(s| s,a)}{R( s'| s, a)} \right) + \log(\nicefrac{|\Mcal|}{\delta})
\\
= &~ \log\ell_{\Dcal}(M^\star) - \log\ell_\Dcal(M) + \log(\nicefrac{|\Mcal|}{\delta}).
\tag{$\ell_{\Dcal}(\cdot)$ is defined in \Eqref{eq:def_LD}}
\end{align*}
This completes the proof.
\end{proof}

\subsection{Proof of Main Theorems}

\begin{proof}[\cpfname{thm:perf}]
By the optimality of $\pihat$ (from \cref{eq:def_pihat}), we have
\begin{align}
J(\picom) - J(\pihat) = &~ J(\picom) - J(\piref) - \left[ J(\pihat) - J(\piref) \right]
\nonumber
\\
\leq &~ J(\picom) - J(\piref) - \min_{M \in \Mcal_\alpha} \left[ J_M(\pihat) - J_M(\piref) \right]
\tag{by \cref{lem:MLE_Mstar}, we have $M^\star \in \Mcal_\alpha$}
\\
\leq &~ J(\picom) - J(\piref) - \min_{M \in \Mcal_\alpha} \left[ J_M(\picom) - J_M(\piref) \right],
\label{eq:perfbdeq1}
\end{align}
where the last step is because of $\picom \in \Pi$
By the simulation lemma (\cref{lem:simulation}), we know for any policy $\pi$ and any $M \in \Mcal_\alpha$, 
\begin{align}
\left| J(\pi) - J_M(\pi) \right| \leq &~ \frac{\Vmax}{1 - \gamma} \E_{d^\pi} \left[ \TV\left(P_M(\cdot|s,a), P^\star(\cdot|s,a)\right) \right] + \frac{1}{1 - \gamma} \E_{d^\pi} \left[\left|R_M(s,a) - R^\star(s,a)\right| \right]
\nonumber
\\
\leq &~ \frac{\Vmax}{1 - \gamma} \sqrt{\E_{d^\pi} \left[ \TV\left(P_M(\cdot|s,a), P^\star(\cdot|s,a)\right)^2 \right]} + \frac{1}{1 - \gamma} \sqrt{ \E_{d^\pi} \left[\left(R_M(s,a) - R^\star(s,a)\right)^2 \right]}
\nonumber
\\
\lesssim &~ \frac{\Vmax}{1 - \gamma} \sqrt{\E_{d^\pi} \left[ \TV\left(P_M(\cdot|s,a), P^\star(\cdot|s,a)\right)^2 + \left(R_M(s,a) - R^\star(s,a)\right)^2 \right]}
\tag{$a \lesssim b$ means $a \leq \Ocal(b)$}
\\
\leq &~ \frac{\Vmax\sqrt{\C_\Mcal(\pi)}}{1 - \gamma} \sqrt{\E_{\mu} \left[ \TV\left(P_M(\cdot|s,a), P^\star(\cdot|s,a)\right)^2 + \left(R_M(s,a) - R^\star(s,a)\right)^2 \right]}
\nonumber
\\
\lesssim &~ \frac{\Vmax\sqrt{\C_\Mcal(\pi)}}{1 - \gamma} \sqrt{\frac{\max_{M' \in \Mcal} \Lcal_\Dcal(M') - \Lcal_\Dcal(M) +  \log(\nicefrac{|\Mcal|}{\delta})}{n}}
\tag{by \cref{lem:Vspace_M}}
\\
\lesssim &~ \frac{\Vmax\sqrt{\C_\Mcal(\pi)}}{1 - \gamma} \sqrt{\frac{\log(\nicefrac{|\Mcal|},{\delta})}{n}}
\label{eq:perfbdeq2}
\end{align}
where the last step is because $\max_{M' \in \Mcal} \Lcal_\Dcal(M') - \Lcal_\Dcal(M) \leq \alpha = \Ocal(\log(\nicefrac{|\Mcal|}{\delta}) / n$ by \cref{eq:v_space}.

Combining \cref{eq:perfbdeq1,eq:perfbdeq2}, we obtain
\begin{align*}
J(\picom) - J(\pihat) \lesssim &~ \left[\sqrt{\C_\Mcal(\picom)} + \sqrt{\C_\Mcal(\piref)}\right] \cdot \frac{\Vmax}{1 - \gamma} \sqrt{\frac{\log(\nicefrac{|\Mcal|},{\delta})}{n}}.
\end{align*}
This completes the proof.
\end{proof}

\begin{proof}[\cpfname{thm:RPI}]
\begin{align*}
J(\piref) - J(\pihat) = &~ J(\piref) - J(\piref) - \left[ J(\pihat) - J(\piref) \right]
\nonumber
\\
\leq &~ - \min_{M \in \Mcal_\alpha} \left[ J_M(\pihat) - J_M(\piref) \right]
\tag{by \cref{lem:MLE_Mstar}, we have $M^\star \in \Mcal_\alpha$}
\\
= &~ - \max_{\pi \in \Pi}\min_{M \in \Mcal_\alpha} \left[ J_M(\pi) - J_M(\piref) \right]
\tag{by the optimality of $\pihat$ from \cref{eq:def_pihat}}
\\
\leq &~ - \min_{M \in \Mcal_\alpha} \left[ J_M(\piref) - J_M(\piref) \right]
\tag{$\piref \in \Pi$}
\\
= &~ 0.
\end{align*}
\end{proof}
\section{Proofs for \cref{sec:RPI}}


\begin{proof}[\cpfname{lm:fixed-point lm of mb-atac}]
We prove the result by contradiction.
First notice $\min_{M\in\Mcal} J_M(\pi') - J_M(\pi') = 0$.
Suppose there is $\overline{\pi} \in \Pi$ such that $\min_{M\in\Mcal_\alpha} J_M(\bar{\pi}) - J_M(\pi') >0$, which implies that $J_M(\bar{\pi}) >J_M(\pi') $, $\forall M \in \Mcal_\alpha$.
Since $\Mcal \subseteq \Mcal_\alpha$, we have 
\begin{align*}
    \min_{M\in\Mcal} J_M(\bar{\pi})  + \psi(M) 
    &>  \min_{M\in\Mcal}  J_M(\pi')  + \psi(M) =  \max_{\pi\in\Pi}\min_{M\in\Mcal}  J_M(\pi)  + \psi(M)
\end{align*}
which is a contradiction of the maximin optimality. Thus $\max_{\pi\in\Pi} \min_{M\in\Mcal_\alpha} J_M(\bar{\pi}) - J_M(\pi') = 0 $, which means $\pi'$ is a solution.

For the converse statement, suppose $\pi$ is a fixed point. We can just let $\psi(M) = -J_M(\pi)$. Then this pair of $\pi$ and $\psi$ by definition of the fixed point satisfies \eqref{eq:candidate fixed-point policies}.
\end{proof}


\end{document}